\definecolor{Orange}{rgb}{1,0.64,0}
\definecolor{lgray}{rgb}{0.85,0.85,0.85}
\newcommand{\argmin}{\operatornamewithlimits{arg\ min}}
\newtheorem{theorems}{theorems}
\newtheorem{thMinsod}[theorems]{Claim}
\begin{document}

\title{Data granulation by the principles of uncertainty}

\author[1]{Lorenzo Livi\thanks{llivi@scs.ryerson.ca}\thanks{Corresponding author}}
\author[1]{Alireza Sadeghian\thanks{asadeghi@ryerson.ca}}
\affil[1]{Dept. of Computer Science, Ryerson University, 350 Victoria Street, Toronto, ON M5B 2K3, Canada}
\renewcommand\Authands{, and }
\providecommand{\keywords}[1]{\textbf{\textit{Index terms---}} #1}

\maketitle

\begin{abstract}
Researches in granular modeling produced a variety of mathematical models, such as intervals, (higher-order) fuzzy sets, rough sets, and shadowed sets, which are all suitable to characterize the so-called information granules.
Modeling of the input data uncertainty is recognized as a crucial aspect in information granulation.
Moreover, the uncertainty is a well-studied concept in many mathematical settings, such as those of probability theory, fuzzy set theory, and possibility theory.
This fact suggests that an appropriate quantification of the uncertainty expressed by the information granule model could be used to define an invariant property, to be exploited in practical situations of information granulation.
In this perspective, we postulate that a procedure of information granulation is effective if the uncertainty conveyed by the synthesized information granule is in a monotonically increasing relation with the uncertainty of the input data.
In this paper, we present a data granulation framework that elaborates over the principles of uncertainty introduced by Klir.
Being the uncertainty a mesoscopic descriptor of systems and data, it is possible to apply such principles regardless of the input data type and the specific mathematical setting adopted for the information granules.
The proposed framework is conceived (i) to offer a guideline for the synthesis of information granules and (ii) to build a groundwork to compare and quantitatively judge over different data granulation procedures.
To provide a suitable case study, we introduce a new data granulation technique based on the minimum sum of distances, which is designed to generate type-2 fuzzy sets.
The automatic membership function elicitation is completely based on the dissimilarity values of the input data, which makes this approach widely applicable.
We analyze the procedure by performing different experiments on two distinct data types: feature vectors and labeled graphs. Results show that the uncertainty of the input data is suitably conveyed by the generated type-2 fuzzy set models.\\
\keywords{Data granulation; Granular modeling and computing; Principles of uncertainty; Uncertainty measure; Type-2 fuzzy set.}
\end{abstract}

\section{Introduction}

Granulation of information \cite{INT:INT20390,pedrycz2008handbook,6399462,Ulu20133713,liang2006information} emerges as an essential data analysis paradigm.
Information used or acquired to describe an abstract/physical/social process is usually expressed in terms of data (experimental evidence).
Therefore, granulation of information usually translates to data granulation.
Granulation of data can be roughly described as the action of aggregating semantically and functionally similar elements of the available experimental evidence.
This is performed to achieve a higher-level data description, which is implemented in terms of information granules (IGs) \cite{pedrycz2013granular}.
IGs are sound data aggregates that are formally described by a suitable mathematical model. Many mathematical settings have been proposed so far in the related literature, such as intervals--hyperboxes, (higher order) fuzzy sets, rough sets, and shadowed sets \cite{pedrycz2008handbook}.
The synthesized IGs can be used for interpretability purposes \cite{Mencar20084585,mencar_interpretability} or they can be used as a computational component of a suitable intelligent system \cite{apolloni2008interpolating,pedrycz2013granular,gralg_2012,pedrycz2011granular,si_asoc_grc,ganivada2011fuzzy,Pedrycz20083720,zhang2008granular,melin2013review,lu2014modeling}.
In any case, the problem of designing effective and justifiable data granulation procedures (GPs) is of paramount importance \cite{Choi20092102,6247495,usit2_2012,livi_hooman_it2fs_fitk,Castillo20115590,pedrycz2008dynamic,huang2013information,song2014human}.

The principle of justifiable granularity (PJG) is a well-established guideline for the synthesis of IGs \cite{6459568,Pedrycz20134209,pedrycz_timeseries}.
The PJG states that granulation should be performed by finding the ``optimal'' compromise among two conflicting requirements: specificity and generality. In other terms, an IG modeling input data should be designed such that it retains only the essential information (it should be specific, conveying a specific semantic content) but, at the same time, it should cover a reasonable amount of information.
Since the PJG is conceived to provide an adaptive mechanism to the information granulation problem, it is not designed to directly offer a built-in mechanism to objectively evaluate the quality of the granulation itself.
To this end, it is necessary to rely on external performance measures to quantify and judge over the quality of an IG.

The uncertainty is a peculiar property of virtually every human action that involves reasoning, decision making, and perception \cite{utkin2007decision,yager1992decision}.
Modeling the uncertainty of the input data is an essential mission in data granulation.
In fact, any IG model is designed to handle and hence express the uncertainty through an appropriate formalism. How the uncertainty is embedded into the IG model depends, of course, on the specific mathematical setting used for the IG.
However, while the numerical quantification of the uncertainty pertaining a specific situation may change as we change the mathematical setting of the IG, the \textit{level} of uncertainty should remain the same.
In these terms, the principles of uncertainty \cite{Klir199515,uncertainty_klir_2001} offer a compelling guideline to implement and evaluate practical data granulation techniques.

In this paper, we elaborate a conceptual data granulation framework over the principles of uncertainty.
A preliminary version of the herein exposed ideas appeared in \cite{entropy_preserving_graph_it2}. Here we further elaborate over this preliminary work by providing a more extensive discussion of the framework, offering new experiments that demonstrate the different facets underlying such ideas.
In the proposed framework we idealize the uncertainty as an ``invariant'' property, to be preserved as much as possible during the granulation of the input data.
As a consequence, we are able to objectively quantify the effectiveness of the granulation, regardless of the input data representation and the adopted IG model.
This interpretation allows also to quantitatively judge on a common groundwork different data granulation techniques operating on the same data.
We provide a demonstration of these ideas by discussing a data granulation technique that generates type-2 fuzzy sets (T2FSs).

This is paper is structured as follows. Sec. \ref{sec:poui} introduces the principles of uncertainty.
Throughout Sec. \ref{sec:granulation} we introduce the proposed conceptual framework for data granulation.
In Sec. \ref{sec:t2-minsod} we present a procedure to generate T2FSs by means of the minimum sum of distances (MinSOD) technique.
In Sec. \ref{sec:experiments} we discuss the experiments and related results. Sec. \ref{sec:conclusions} concludes the paper.
We provide two appendices: \ref{sec:t2fs} introduces to the context of T2FSs, while \ref{sec:minsod} the MinSOD.

\section{The Principles of Uncertainty}
\label{sec:poui}

The principles of uncertainty have been introduced by \cite{Klir199515} two decades ago, with the aim of providing high-level guidelines to the development of well-justified methods for problem solving in presence of uncertainty.
Such principles elaborate over the ubiquitous concepts of uncertainty and information.
It is intuitive to understand that uncertainty and information are intimately related: the reduction of uncertainty is caused by gaining new information, and vice versa.

Three principles have been introduced (quotes are taken from \cite{Klir199515}):
\begin{enumerate}
 \item Principle of minimum uncertainty: ``It facilitates the selection of meaningful alternatives from solution sets obtained by solving problems in which some of the initial information is inevitably reduced in the solutions to various degrees. By this principle, we should accept only those solutions in a given solution set for which the information reduction is as small as possible.'';
 \item Principle of maximum uncertainty: ``This is reasoning in which conclusions are not entailed in the given premises. Using common sense, the principle may be expressed by the following requirement: in any ampliative inference, use all information available but make sure that no additional information is unwittingly added.'';
 \item Principle of uncertainty invariance: ``The principle requires that the amount of uncertainty (and information) be preserved when a representation of uncertainty in one mathematical theory is transformed into its couterpart in another theory.''.
\end{enumerate}

A combination of the first and third principle provides a compelling guideline for the purpose of data granulation. In fact, granulation of information implies mapping some input data (experimental evidence) originating from a certain input domain, say $\mathcal{X}$, to a domain of IGs, say $\mathcal{Y}$.
We argue that, when performing such a mapping, the uncertainty, regardless of the adopted formal mathematical framework, should be considered as an invariant property to be preserved as much as possible.

\section{Data Granulation with the Principles of Uncertainty}
\label{sec:granulation}

In this section, we introduce the proposed data granulation framework.
Fig. \ref{fig:mapping} illustrates the data granulation process. A procedure of data granulation can be formalized as a mapping, $\phi(\cdot)$, among two domains: input domain, $\mathcal{X}$, and the output domain, $\mathcal{Y}$.
$\mathcal{X}$ is the domain of the input data, whereas $\mathcal{Y}$ is a domain of IGs (e.g., a domain of hyperboxes, fuzzy sets, shadowed sets, rough sets and so on).
In practice, $\phi(\cdot)$ is a formal procedure for mapping a finite input dataset $\mathcal{S}\in\mathcal{P}_{<\infty}(\mathcal{X})$ with an output IG, say $\widetilde{\mathcal{A}}\in\mathcal{Y}$, i.e., $\widetilde{\mathcal{A}}=\phi(\mathcal{S})$.
Please note that we used a special mapping, $\mathcal{P}_{<\infty}(\cdot)$, in the input domain to allow discussing about $\mathcal{S}$ in terms of ``element'' of the input domain; in the following $\mathcal{P}_{<\infty}(\mathcal{X})$ is assumed to return all $n$-subsets of $\mathcal{X}$, with $n$ finite.
Note that $\mathcal{Y}$, as well as $\widetilde{\mathcal{A}}$, should be denoted by making explicit reference to $\mathcal{X}$ and $\mathcal{S}$, respectively, since IGs depend on the input. However, if no confusion is possible, we will avoid such specifications.
\begin{figure}[ht!]
 \centering
 \includegraphics[viewport=0 0 449 123,scale=0.43,keepaspectratio=true]{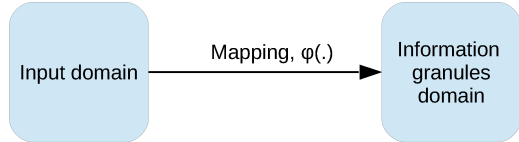}
 \caption{Data granulation as a mapping, $\phi: \mathcal{P}_{<\infty}(\mathcal{X})\rightarrow\mathcal{Y}$.}
 \label{fig:mapping}
\end{figure}

There are a number of important questions that should be answered: ``Is the mapping $\phi(\cdot)$ well-justified? Moreover, how do we asses objectively the quality of the mapping?''; ``Are there invariant properties that must be preserved in the transformation from $\mathcal{S}$ to $\widetilde{\mathcal{A}}$?''; ``Can we numerically quantify those properties?''; ``Given two GPs, are we able to affirm that one performs a better granulation than the other by considering the same experimental conditions?''.
Reasoning over those questions provides important motivations for the design and formal evaluation of information GPs.

IGs are semantically sound constructs that are synthesized to convey higher-level information with respect to (w.r.t) the data from which they are generated \cite{pedrycz2013granular}.
All models used in information granulation \cite{pedrycz2008handbook} are designed to realize a ``simplification'' of the input data. The simplification consists in aggregating data that are considered indistinguishable (also termed indiscernible) and functionally/semantically related. IGs are hence designed also to handle the uncertainty caused by this simplification.
How the uncertainty is handled by the IG model depends on the specific mathematical setting used to describe the IG \cite{uncertainty_klir_2001}.
However, it is a reasonable assumption that, regardless of the specific mathematical setting, two IGs with different models, but synthesized from the same input data, should convey a comparable uncertainty, i.e., they should agree at least on the ``level of uncertainty''. The same concept holds for the uncertainty measured in the input with the one measured in the resulting output IG.

In the following, we formalize a conceptual framework to design and evaluate specific implementations of the mapping $\phi(\cdot)$. We refer to the proposed framework as the Principle of Uncertainty Level Preservation (PULP).
Usually, $\mathcal{X}$ is a domain of non-granulated data, such as $\mathbb{R}^d$ vectors, sequences of objects, or graphs. However, $\mathcal{X}$ can be conceived also as a domain of IGs. In this case, since the role of $\phi(\cdot)$ is to provide an abstraction, $\mathcal{Y}$ must be a domain of higher-level IGs w.r.t. those of $\mathcal{X}$.
In the following, however, we will consider mappings from input domains of non-granulated data types only.

\subsection{Minimization of the Input--Output Uncertainty Difference}
\label{sec:input-output_min}

First and most important component of PULP is a measure to calculate the uncertainty.
Let $\hat{H}: \mathcal{P}_{<\infty}(\mathcal{X})\rightarrow\mathbb{R}^+$ and $\check{H}: \mathcal{Y}\rightarrow\mathbb{R}^+$ be, respectively, the uncertainty measures for the input and output domain.
Experimental evidence is usually collected in the form of a finite dataset, $\mathcal{S}$, containing $n=|\mathcal{S}|$ patterns/samples proper of the input domain, $\mathcal{X}$. To provide a better description of the properties of the data in $\mathcal{S}$, usually it is idealized an underlying data generating process, $P$, which actually generates instances of $\mathcal{S}$.
This abstract process can be characterized by a deterministic analytical model known in closed-form, a non-deterministic model that is assumed to provide a suitable description of $P$, or an unknown model. In the last case, which is the most common one, the only useful information that is available is the finite dataset, i.e., $\mathcal{S}$.
For all practical purposes, the dataset $\mathcal{S}$ is usually assumed to be representative of all important statistics of the underlying process $P$.
Therefore, in the following we center our discussion on $\mathcal{S}$.

Let $\hat{H}(\mathcal{S})$ and $\check{H}(\phi(\mathcal{S}))$ be, respectively, the uncertainty calculated for the input dataset and the synthesized IG.
According to the guidelines conceptualized in PULP, we define the granulation error (GE) as
\begin{equation}
\label{eq:ge}
\delta = \lVert\hat{H}(\mathcal{S}) - \psi(\check{H}(\phi(\mathcal{S})))\rVert,
\end{equation}
where $\lVert\cdot\rVert$ is a norm and $\psi: \mathbb{R}^+\rightarrow\mathbb{R}^+$ is a monotonically increasing function that it is used to map the different formalizations of the uncertainty (see Fig. \ref{fig:mapping_uncertainty} for an illustration). $\psi(\cdot)$ must be monotone increasing since, reasonably, if the input uncertainty increases, then the output uncertainty must increase as well, although the increments could be of a different extent.
Eq. \ref{eq:ge} provides a formal way to affirm that $\phi(\cdot)$ is characterized by a GE equal to $\delta$, which is important also to compare different procedures operating on the same data. Clearly, the lower the error the better the procedure.
\begin{figure}[ht!]
 \centering
 \includegraphics[viewport=0 0 449 123,scale=0.43,keepaspectratio=true]{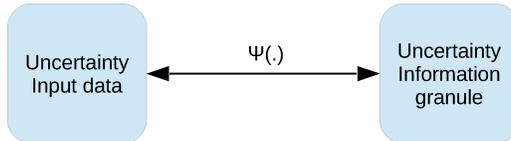}
 \caption{Function $\psi(\cdot)$ that provides a bridge among the input and output quantifications of the uncertainty.}
 \label{fig:mapping_uncertainty}
\end{figure}

From the theoretical viewpoint, the best GP, given $\mathcal{S}$, is the one that minimizes the GE,
\begin{equation}
\label{eq:opt_ig_procedure}
\phi(\cdot)^* = \argmin_{\phi(\cdot)\in\Phi} \lVert\hat{H}(\mathcal{S}) - \psi(\check{H}(\phi(\mathcal{S})))\rVert^2,
\end{equation}
where $\Phi$ is the set of all GPs suitable for the task at hand.
However, closed-form expressions for either $\phi(\cdot)$ and $\psi(\cdot)$ are necessary to evaluate either (\ref{eq:ge}) and (\ref{eq:opt_ig_procedure}). Moreover, a definition for the search space, $\Phi$, is also necessary in the case of Eq. \ref{eq:opt_ig_procedure}.

In practice, GPs are usually implemented as algorithms that most of times depend on some parameters, say $p$.
This is explicitly formalized by writing $\phi(\cdot; p)$. Therefore, to evaluate the quality of the mapping provided by a specific GP, we propose to deal with following optimization problem:
\begin{equation}
\label{eq:opt_params}
\delta^* = \min_{p} \lVert \hat{H}(\mathcal{S}) - \psi(\check{H}(\phi(\mathcal{S}; p))) \rVert^2.
\end{equation}

The optimal solution to (\ref{eq:opt_params}) yields the minimum GE, $\delta^*$, achievable for $\phi(\cdot)$.
Also $\delta^*$ can be used to objectively judge over the quality of the granulation provided by $\phi(\cdot)$.

The definition of the function $\psi(\cdot)$ is an important problem to be addressed. Such a function plays the role of the transformation among the different formalisms used to handle the uncertainty in the input (data) and output (granule) domains.
Defining $\psi(\cdot)$ in closed-form might be difficult, although it could be possible in specific cases (see Sec. \ref{sec:anal} for an example).
If this is the case, and also $\phi(\cdot)$ is available in closed-form, Eq. \ref{eq:opt_params} can be solved directly.
Optimization strategies to deal with (\ref{eq:opt_params}) depend on many factors, such as the nature of the uncertainty measures ($\hat{H}(\cdot)$ and $\check{H(\cdot)}$), and most importantly the specific implementation of $\phi(\cdot)$.
Providing general guidelines to the design of the specific optimization strategy is beyond the scope of this paper.

A universally valid method to obtain $\psi(\cdot)$ from a given problem instance, is via a suitable best-fitting algorithm. This can be performed, for instance, by analyzing $m$ i.i.d. dataset instances, $\mathcal{S}_i, i=1, 2, ..., m$, sampled from the same underlying data generating process.
The approximation of $\psi(\cdot)$, say $f(\cdot)$, is hence determined by fitting $m$ pairs of numbers obtained by the respective evaluations of $\hat{H}(\mathcal{S})$ and $\check{H}(\phi(\mathcal{S}; p))$.
The approximation would be characterized by a fitting error, $\epsilon\geq0$, which depends on $m$ and on the non-linearity of the underlying relation among the formalizations of the input/output uncertainty.
$\epsilon$ can be used in place of $\delta^*$ when the analytical definition of $\psi(\cdot)$ is not available.
Notably, the GE induced by the best-fitting error is defined as
\begin{equation}
\label{eq:ge_epsilon}
\epsilon = \lVert \hat{H}(\mathcal{S}) - f^*(\check{H}(\phi(\mathcal{S}))) \rVert,
\end{equation}
where $f^*(\cdot)$ is the optimal best-fitting function, which is derived by searching for the parameters $p$ of $\phi(\cdot)$ that generate $m$ samplings yielding the minimum fitting error.

\subsection{Brief Qualitative Discussion on the Proposed Framework}

The main contribution of PULP is a built-in formal criterion that is exploitable to judge over results of data granulation.
This is possible via the analysis of a quantity, GE, which is defined as the difference among the uncertainty measured in the input with the one calculated in the resulting output IG. In fact, the uncertainty in PULP is intended as an invariant property, to be preserved as much as possible during the granulation.

This particular aspect, which offers also a diagnostic tool of practical importance, is not included in the PJG. In fact, the PJG provides a guideline to design GPs by considering the two conflicting requirements of specificity (essentiality) and coverage -- an IG should be synthesized by finding a problem-dependent compromise among suitable implementations of those two factors.
Judging over the quality of the granulation is thus possible only indirectly, via human interpretation, by calculating some quality index, or by considering the synthesized IG as the input/component of another system (e.g., by using the IG in a classification system, evaluating thus the quality of the granulation as the accuracy of the classification).

If the input data contain outliers, a procedure designed by following the guidelines offered PULP would reflect also the contribution provided by those specific ``degenerate'' patterns; if not previously removed. In fact, the mapping is evaluated according to the capability of preserving the input uncertainty in the output IG model.
On the other hand, a procedure implemented according to the PJG, would determine an essential subset of input patterns, synthesizing an IG without considering those degenerate data.

Determining which approach offers a better solution, however, is something that depends on the context of application and on the ultimate needs of the user -- PULP is not proposed as a ``replacement'' for the PJG.

\section{A Type-2 Fuzzy Set Membership Functions Elicitation Method based on the MinSOD}
\label{sec:t2-minsod}

In this section we present a practical implementation of the mapping $\phi(\cdot)$ that is based on the MinSOD (see Sec. \ref{sec:minsod}).
The idea is to equip the MinSOD with the capability of generating an IG $\widetilde{\mathcal{A}}$ from $\mathcal{S}$, modeled as a T2FS (an introduction is provided in \ref{sec:t2fs}).
In the following, we refer to such a MinSOD as T2-MinSOD. Algorithm \ref{alg:t2minsod_algorithm} delivers the relative pseudo-code.
Given a dataset $\mathcal{S}$ and a dissimilarity measure $d:\mathcal{X}\times\mathcal{X}\rightarrow\mathbb{R}^+$, the T2-MinSOD can be described as a pair $(\nu, \mu_{\widetilde{\mathcal{A}}}(\cdot))$, where $\nu\in\mathcal{S}$ is the MinSOD representative and $\mu_{\widetilde{\mathcal{A}}}(\cdot)$ is the fuzzy membership function characterizing $\widetilde{\mathcal{A}}$.
Focusing on the IT2FS case, the membership degree of each $x\in\mathcal{S}$ is an interval $\mu_{\widetilde{\mathcal{A}}}(x)=[\mathrm{LMF}_{\widetilde{\mathcal{A}}}(x), \mathrm{UMF}_{\widetilde{\mathcal{A}}}(x)]$ in $[0, 1]$, bounded by LMF and UMF.

The element $\nu$ can be understood a suitable representative of $\mathcal{S}$ (a prototype). In fact, when $\mathcal{X}=\mathbb{R}$, $\nu$ corresponds to the median of the input dataset (see Claim \ref{th:Minsod}).
We exploit this fact in a more general setting, that is, when $\mathcal{X}$ is a user-defined data domain.
This allows us to interpret $\nu$, regardless of the nature of $\mathcal{X}$, as a well-justified representative of $\mathcal{S}$.
Accordingly, the evaluation of the UMF at $\nu$ should be one, denoting full membership.
Analogously, UMF for all other elements in $\mathcal{S}\setminus\{\nu\}$ should be defined by considering the dissimilarity value w.r.t. $\nu$.
This choice is motivated by the fact that we are trying to represent the uncertainty of $\mathcal{S}$ by relying only on the dissimilarity values among its elements.
The UMF is hence defined as a function of the dissimilarity w.r.t. $\nu$:
\begin{equation}
\label{eq:umf}
\mathrm{UMF}_{\widetilde{\mathcal{A}}}(x)=u(d(x, \nu)).
\end{equation}

$u(\cdot)$ is a monotonically non-increasing function of the argument yielding values in $[0, 1]$.
Candidate functions for (\ref{eq:umf}) are the Gaussian, rational quadratic kernel, or a linear functional in $[0, 1]$, like $M-d(x, y)$, where $M$ is the maximum value assumed by $d(\cdot, \cdot)$.

To form an interval membership function, we need to generate also the LMF.
The interval width in an IT2FS quantifies the uncertainty in describing the membership degree of an input element: the wider the interval, the higher the uncertainty.
LMF is determined as follows,
\begin{equation}
\label{eq:lmf}
\mathrm{LMF}_{\widetilde{\mathcal{A}}}(x)=\max\{\mathrm{UMF}(x) - l(D(x, \cdot)), 0\}.
\end{equation}

LMF is formed by considering the difference among the UMF and a function, $l(\cdot)$, of the dissimilarity w.r.t. the whole dataset -- note that $D(x, \cdot)$ denotes the set of dissimilarity values of all elements in $\mathcal{S}$ w.r.t. $x$.
$l(\cdot)$ could be implemented such that to capture the extent of the intra-granule dissimilarity values distribution (e.g., via the average or standard deviation etc.).
In this way, the uncertainty expressed by T2-MinSOD increases along with the diversity of the input patterns.
\begin{algorithm}[h!]\scriptsize
\caption{T2-MinSOD algorithm.}
\label{alg:t2minsod_algorithm}
\begin{algorithmic}[1]
\REQUIRE Dataset $\mathcal{S}$, a dissimilarity measure $d(\cdot, \cdot)$, functions $u(\cdot)$ and $l(\cdot)$
\ENSURE A T2-MinSOD, $(\nu, \mu_{\widetilde{\mathcal{A}}}(\cdot))$
\STATE According to Eq. \ref{eq:minsod}, determine $\nu$ on $\mathcal{S}$ using $d(\cdot, \cdot)$
\STATE Generate the interval-membership function, $\mu_{\widetilde{\mathcal{A}}}(\cdot)$, in the following loop
\FOR{\textbf{each} $x$ in $\mathcal{S}$}
\STATE Compute $\mathrm{UMF}_{\widetilde{\mathcal{A}}}(x)$ using $u(\cdot)$ with Eq. \ref{eq:umf}
\STATE Compute $\mathrm{LMF}_{\widetilde{\mathcal{A}}}(x)$ using $l(\cdot)$ with Eq. \ref{eq:lmf}
\ENDFOR
\RETURN $(\nu, \mu_{\widetilde{\mathcal{A}}}(\cdot))$
\end{algorithmic}
\end{algorithm}

T2-MinSOD could be exploited in many practical ways. For example, by using it (i) as an IG, thus providing a solution to analyze (and interpret) the uncertainty of $\mathcal{S}$, and (ii) as a computational component of a suitable intelligent system, which operates through data aggregation (e.g., a clustering-based procedure).
The T2-MinSOD does not just construct an IT2FS membership function, since in fact it allows also to easily defuzzify the granule by considering the representative $\nu\in\mathcal{S}$.

\section{Experiments}
\label{sec:experiments}

In Sec. \ref{sec:anal}, we discuss an example in which the input--output uncertainty mapping is solvable analytically.
Successively, we provide experiments considering T2-MinSOD operating in two different input domains: (i) Euclidean space and (ii) a domain of labeled graphs.
The first experiment (\ref{sec:exp_mv}) provides us also the possibility to visualize the results.
In Sec. \ref{sec:in-out_pres} we demonstrate that the T2-MinSOD is capable of generating IT2FS models that preserve the input uncertainty with reasonable GEs.
The second experiment (\ref{sec:exp_lg}) is performed considering several datasets of labeled graphs.
Finally, we discuss an experiment where T2-MinSOD is used in the clustering context (\ref{sec:clustering}).

Results are presented by implementing $u(\cdot)$ in Eq. \ref{eq:umf} as a Gaussian kernel -- dependent on the width -- and $l(\cdot)$ of Eq. \ref{eq:lmf} as the average.
We generate IGs modeled as IT2FSs. We rely on uncertainty measures based on entropy (see Refs. \cite{Wu:2009:CSR:1502817.1503036,Wu_UncIT2_2007} for detailed discussions on related measures of uncertainty for IT2FSs).
In particular, the uncertainty of the generated IT2FSs is computed by evaluating the (normalized) fuzzy entropy formulation given by \cite{Burillo1996305},
\begin{equation}
\label{eq:it2fs_entropy}
\check{H}(\widetilde{\mathcal{A}}) = \left(\frac{1}{|\mathcal{S}|}\sum_{i=1}^{|\mathcal{S}|} \overline{\mu}_{\widetilde{\mathcal{A}}}(x_i) - \underline{\mu}_{\widetilde{\mathcal{A}}}(x_i)\right) \in [0, 1].
\end{equation}

Analogously, we will characterize the uncertainty of the input by a suitable entropy measure.

\subsection{A Problem Solvable Analytically}
\label{sec:anal}

Let $\mathcal{S}=\{x_1, x_2, ..., x_n\}$ be a dataset of $n$ elements sampled from a Gaussian data generating process (also called source).
In this case, we calculate the (Shannon) entropy in closed-form \cite{cover2006elements}:
\begin{equation}
\label{eq:Gaussian_source_entropy}
\hat{H}(\mathcal{S}) = 1/2\ln(2\pi e \sigma^2),
\end{equation}
where $\sigma$ is the variance that completely characterizes the source (the higher the variance, the higher the entropy). For the purpose of this example, let us assume $\sigma\in[0, 1]$. Note that Eq. \ref{eq:Gaussian_source_entropy} actually holds for a dataset $\mathcal{S}$ as $n\rightarrow\infty$.
Now, let us define $\phi(\cdot)$ as a mapping that takes $\mathcal{S}$ and generates an IT2FS, $\widetilde{\mathcal{A}}$, with $\mu_{\widetilde{\mathcal{A}}}(x_i)=[0, \sigma], \forall x_i\in\mathcal{S}$.
Therefore, by evaluating (\ref{eq:it2fs_entropy}) on $\widetilde{\mathcal{A}}$, we have $\check{H}(\phi(\mathcal{S}))=\sigma$.
The definition of the function $\psi(\cdot)$ to map the uncertainty is straightforward.
In fact, $\psi(\check{H}(\phi(\mathcal{S})))=\psi(\sigma)=1/2\ln(2\pi e \sigma^2)=\hat{H}(\mathcal{S})$.
As a consequence, we can evaluate Eq. \ref{eq:ge} directly, obtaining $\delta= 0$, as $n\rightarrow\infty$.
It is worth noting that, according to Eq. \ref{eq:opt_ig_procedure}, such a GP $\phi(\cdot)$ is optimal.

\subsection{Tests on Real-valued Vectors}
\label{sec:exp_mv}

To grasp the concept in a data-driven scenario, in Fig. \ref{fig:1dvec} we show a sample of 100 patterns distributed according to a 1-dimensional Gaussian distribution with zero mean and $\sigma=0.2$.
In Fig. \ref{fig:1dvec_it2_w} we show the width (as a red spike) of the interval membership calculated by T2-MinSOD that characterizes each input pattern. As desired, patterns closer to the center of the distribution have a shorter interval width.
In Fig. \ref{fig:1dvec_it2} we show a representation of the generated IT2FS. It is possible to clearly recognize the Gaussian shape for both LMF and UMF. Notably, the interval-valued memberships are distributed as expected: the inner parts closer to the center are characterized by a ticker interval that, considering both endpoints, it is also closer to one.
\begin{figure*}[ht!]
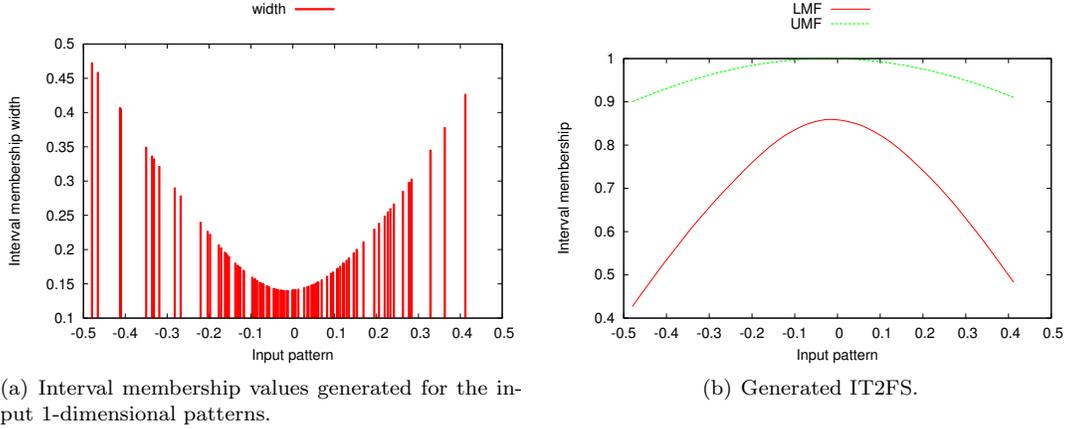

\centering

\subfigure[Interval membership values generated for the input 1-dimensional patterns.]{
 \includegraphics[viewport=0 0 346 247,scale=0.55,keepaspectratio=true]{./1dvec_it2_w}
\label{fig:1dvec_it2_w}}
~
\subfigure[Generated IT2FS.]{
 \includegraphics[viewport=0 0 346 247,scale=0.55,keepaspectratio=true]{./1dvec_it2}
\label{fig:1dvec_it2}}

\caption{Widths of the interval memberships assigned to the input 1-dimensional patterns (\ref{fig:1dvec_it2_w}) and a representation of the generated IT2FS (\ref{fig:1dvec_it2}).}
\label{fig:1dvec}
\end{figure*}

Fig. \ref{fig:2dvec} shows a dataset distributed according to a 2-dimensional Gaussian distribution, with zero mean and spherical covariance matrix controlled by $\sigma=0.2$. Fig. \ref{fig:2dvec_distr} shows the obtained configuration of the 100 sampled patterns -- the green pattern is the computed MinSOD element. Fig. \ref{fig:2dvec_it2} shows the LMF and UMF of the generated IT2FS. It is worth noting that, in this case, input patterns have no trivial ordering, which justifies the visualization (\ref{fig:2dvec_it2}) of the generated IT2FS.
\begin{figure*}[ht!]
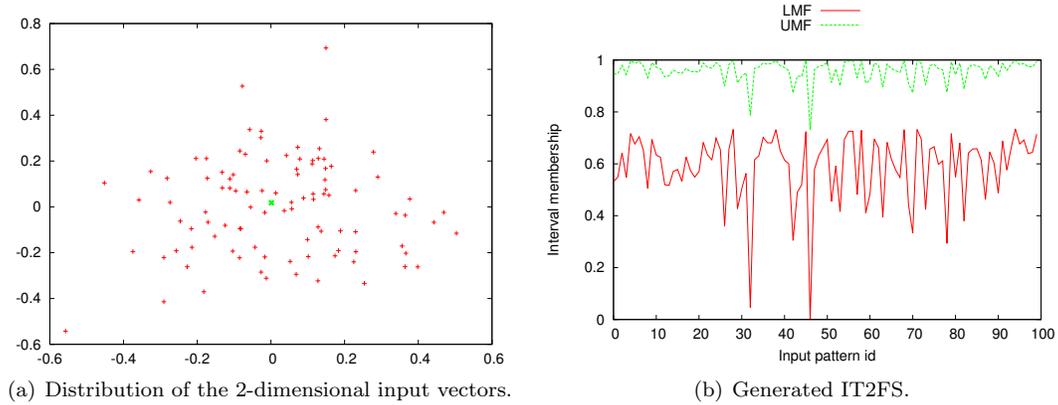

\centering

\subfigure[Distribution of the 2-dimensional input vectors.]{
 \includegraphics[viewport=0 0 334 238,scale=0.55,keepaspectratio=true]{./2dvec}
\label{fig:2dvec_distr}}
~
\subfigure[Generated IT2FS.]{
 \includegraphics[viewport=0 0 348 247,scale=0.55,keepaspectratio=true]{./2dvec_it2}
\label{fig:2dvec_it2}}

\caption{Example of the IT2FS generation over 100 patterns extracted from a 2-dimensional Gaussian distribution.}
\label{fig:2dvec}
\end{figure*}

\subsection{Preservation of Input--Output Uncertainty}
\label{sec:in-out_pres}

Here we calculate, in terms of GE, the quality of the granulation performed by the proposed T2-MinSOD.
We perform the test by generating the dataset $\mathcal{S}$ using a unidimensional Gaussian source (see Eq. \ref{eq:Gaussian_source_entropy} for the entropy expression) and a unidimensional exponential source.
The (Shannon) entropy of the exponential distribution is given by
\begin{equation}
\label{eq:exp_source}
\hat{H}(\mathcal{S}) = 1 - \ln(\lambda),
\end{equation}
where $\lambda>0$ is the scale parameter. From Eq. \ref{eq:exp_source}, it is clear that the entropy decreases as $\lambda$ increases.
For closed-form entropy formulas of other well-known distributions we refer the reader to \cite{Zografos200571}.

First, we performed a batch of 10 experiments by changing the variance, $\sigma$, of the Gaussian source in an suitable range.
Notably, $\sigma$ is progressively selected in $[0.1, 0.55]$, with an increment step of 0.05.
Fig. \ref{fig:bestfit_Gauss} depicts the results of the linear best-fitting (in the figure denoted as a function $f(\cdot)$) among the obtained 10 pairs of source and output entropy values.
The high coefficient of determination ($R^2\simeq 0.98$) denotes a very good relation among the input--output uncertainty; in this case, a linear model is sufficient to map the input--output uncertainty.
The optimization (\ref{eq:ge_epsilon}) to search for the optimal best-fit (dependent only on the width of the Gaussian implementing Eq. \ref{eq:umf}) is performed with a linear search on the $[0.01, 5]$ range with a step-size of 0.01.
Testing of the optimal best-fitting is performed on a new dataset instance generated with $\sigma=0.325$; the test is repeated 10 times by using different random initializations.
We found that $\epsilon \simeq 0.02 \pm 0.005$, which can be considered as a good result, demonstrating thus that the T2-MinSOD is capable of preserving the input uncertainty in the output IG model with a reasonable GE.

We repeated the experiment with an exponential source (\ref{eq:exp_source}), by varying $\lambda$ in $[1.5, 2.4]$ with an increment step of 0.1. The linear best-fit is once again sufficient ($R^2\simeq0.88$) to model the input--output relation among the entropic characterizations of the uncertainty -- see Fig. \ref{fig:bestfit_Exp}.
\begin{figure*}[ht!]
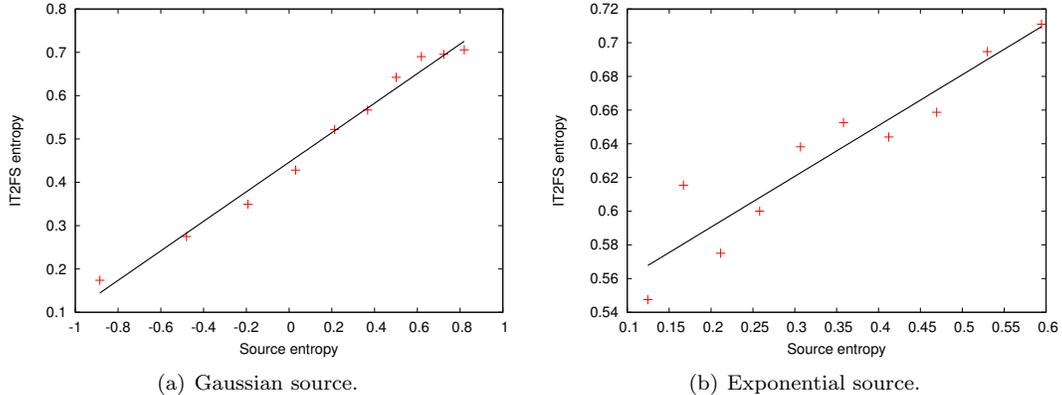

\centering

\subfigure[Gaussian source.]{
 \includegraphics[viewport=0 0 342 243,keepaspectratio,scale=0.55]{./bestfit_gauss}
\label{fig:bestfit_Gauss}}
~
\subfigure[Exponential source.]{
 \includegraphics[viewport=0 0 347 243,scale=0.55,keepaspectratio=true]{./bestfit_exp}
\label{fig:bestfit_Exp}}

\caption{Best fittings calculated by considering Gaussian (\ref{fig:bestfit_Gauss}) and exponential (\ref{fig:bestfit_Exp}) sources.}
\label{fig:best_fit}
\end{figure*}

\subsection{Tests on Labeled Graphs}
\label{sec:exp_lg}

We demonstrate the modeling capability of the T2-MinSOD when operating in the labeled graphs domain.
We consider both synthetic labeled graphs \cite{seriation+gradis_lncs_2012,odse,gralg_2012} and the letter dataset of the IAM repository \cite{riesen+bunke2008}; those datasets are originally conceived for benchmarking graph classifiers.
In the first case, we consider four out of the 15 original datasets -- datasets contain graphs constructed as Markov chains of decreasing similarity, i.e., related classification problems are intended with decreasing difficulty.
In the latter case we consider the letter dataset with two level of distortions: low (Letter-L) and high (Letter-H) -- this dataset contains graphs representing digitalized letters drawn over the 2D plane.
The dissimilarity measure (\ref{eq:minsod}) is implemented as the graph coverage graph matching algorithm \cite{livi2012_pgm}.

Fig. \ref{fig:graphs} shows the interval widths calculated for the four synthetic datasets of graphs (denoted as DS-G-2, DS-G-6, DS-G-10, and DS-G-14, where DS-G-2 induces a harder classification problems than DS-G-6 and so on).
As expected, the entropy (\ref{eq:it2fs_entropy}) calculated from the IT2FS models is in agreement with the nature of the datasets. More difficult (in terms of recognition) datasets have higher entropy than easier datasets; harder datasets are usually characterized by a less regular pattern organization in the input space.
A similar results holds for Letter-L and Letter-H, shown, respectively, in Figs. \ref{fig:letter1} and \ref{fig:letter2}. The IT2FS model of the easier dataset (Letter-L) denotes less entropy.
\begin{figure*}[ht!]
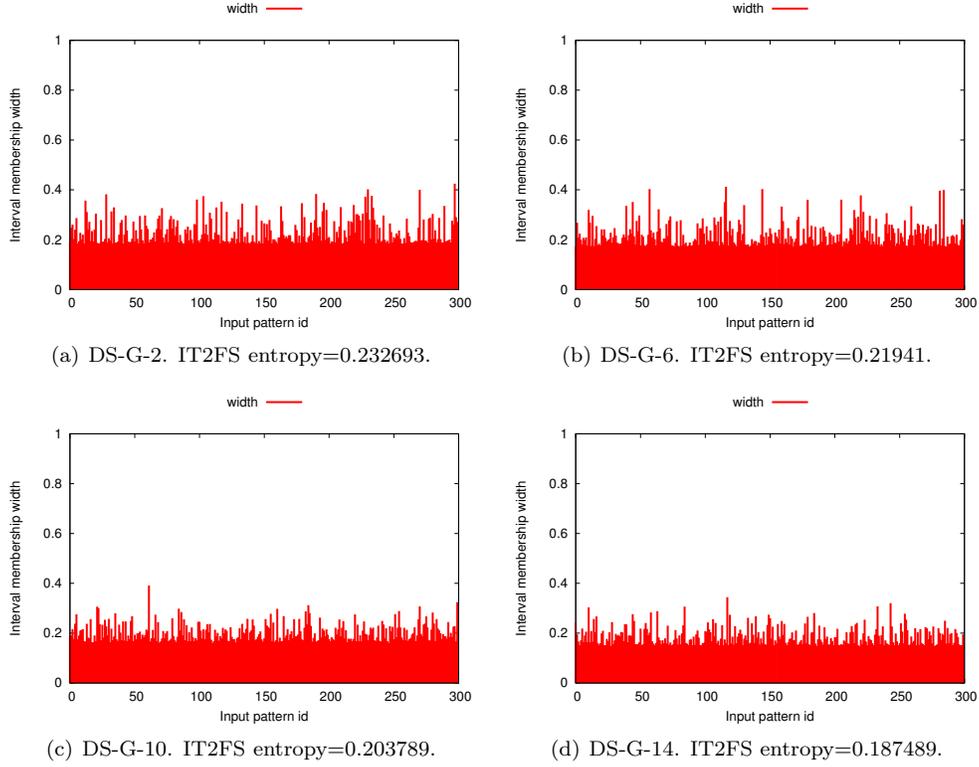

\centering

\subfigure[DS-G-2. IT2FS entropy=0.232693.]{
 \includegraphics[viewport=0 0 348 247,scale=0.5,keepaspectratio=true]{./it2_w_graphs_ds2}
\label{fig:ds-g-2}}
~
\subfigure[DS-G-6. IT2FS entropy=0.21941.]{
 \includegraphics[viewport=0 0 348 247,scale=0.5,keepaspectratio=true]{./it2_w_graphs_ds6}
\label{fig:ds-g-6}}

\subfigure[DS-G-10. IT2FS entropy=0.203789.]{
 \includegraphics[viewport=0 0 348 247,scale=0.5,keepaspectratio=true]{./it2_w_graphs_ds10}
\label{fig:ds-g-10}}
~
\subfigure[DS-G-14. IT2FS entropy=0.187489.]{
 \includegraphics[viewport=0 0 348 247,scale=0.5,keepaspectratio=true]{./it2_w_graphs_ds14}
\label{fig:ds-g-14}}

\caption{Widths and related entropy of the generated interval membership functions (synthetic graphs).}
\label{fig:graphs}
\end{figure*}
\begin{figure*}[ht!]
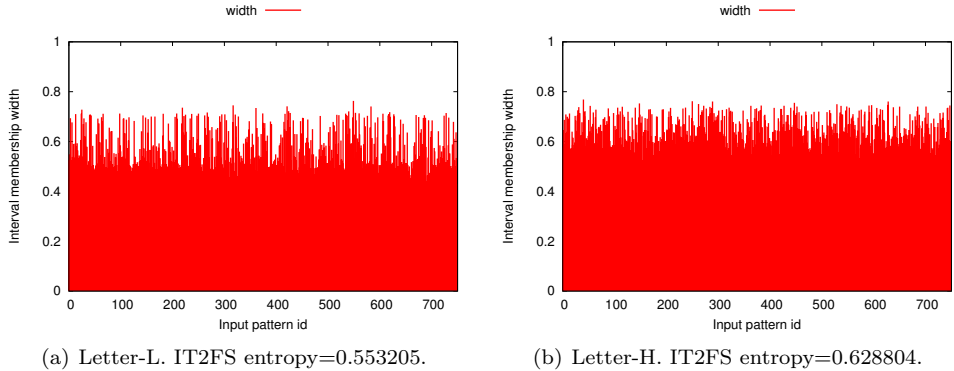

\centering

\subfigure[Letter-L. IT2FS entropy=0.553205.]{
 \includegraphics[viewport=0 0 339 247,scale=0.5,keepaspectratio=true]{./graph_iam_letter1}
\label{fig:letter1}}
~
\subfigure[Letter-H. IT2FS entropy=0.628804.]{
 \includegraphics[viewport=0 0 339 247,scale=0.5,keepaspectratio=true]{./graph_iam_letter2}
\label{fig:letter2}}

\caption{Widths and related entropy of the generated interval membership functions (IAM letter datasets).}
\label{fig:graphs_iam}
\end{figure*}

\subsection{T2-MinSOD in Data Clustering}
\label{sec:clustering}

In this section we use the T2-MinSOD to model clusters of data generated with the well-known \textit{k}-means algorithm \cite{cagata,Jain:2010:DCY:1755267.1755654}.
We process the Iris dataset taken from the UCI repository \cite{Bache+Lichman:2013}.
The Iris dataset contains 150 patterns equally distributed in three classes, named ``Iris-setosa'', ``Iris-versicolor'', and ``Iris-virginica''.
From the analysis of first two components of the PCA shown in Fig. \ref{fig:pca_iris}, it is possible to understand that patterns of the ``Iris-setosa'' class (in red) are well-separated from the others, while those of the other two classes show little overlap (according to the PCA space).
This fact suggests us that the uncertainty of a suitable IG modeling patterns belonging to the ``Iris-setosa'' class should be lower than those calculated from IGs describing the other two classes.

To test this hypothesis, we executed the \textit{k}-means algorithm directly in the input space (no pre-processing of data) setting $k=3$, generating thus three IGs modeled by means of the T2-MinSOD.
Fig. \ref{fig:kmeans} shows the widths of generated interval memberships. It is important to note that the first cluster (\ref{fig:uci_iris_c0}) consists of 50 patterns, all belonging to the ``Iris-setosa'' class. The entropy of the related IT2FS model is $\simeq 0.67$.
The other two clusters (shown in Figs. \ref{fig:uci_iris_c1} and \ref{fig:uci_iris_c2}) contain an unbalanced number of patterns: respectively 38 and 62. As fuzzy entropy calculations show, the uncertainty of those two IT2FS models is greater than the one of the first cluster, which agrees with the (visual) information provided by the PCA.
This result suggests that the IT2FS models generated by means of T2-MinSOD convey also useful and reliable higher-level information, to be exploited for interpretability purposes.
\begin{figure}[ht!]
 \centering
 \includegraphics[viewport=0 0 343 241,scale=0.6,keepaspectratio=true]{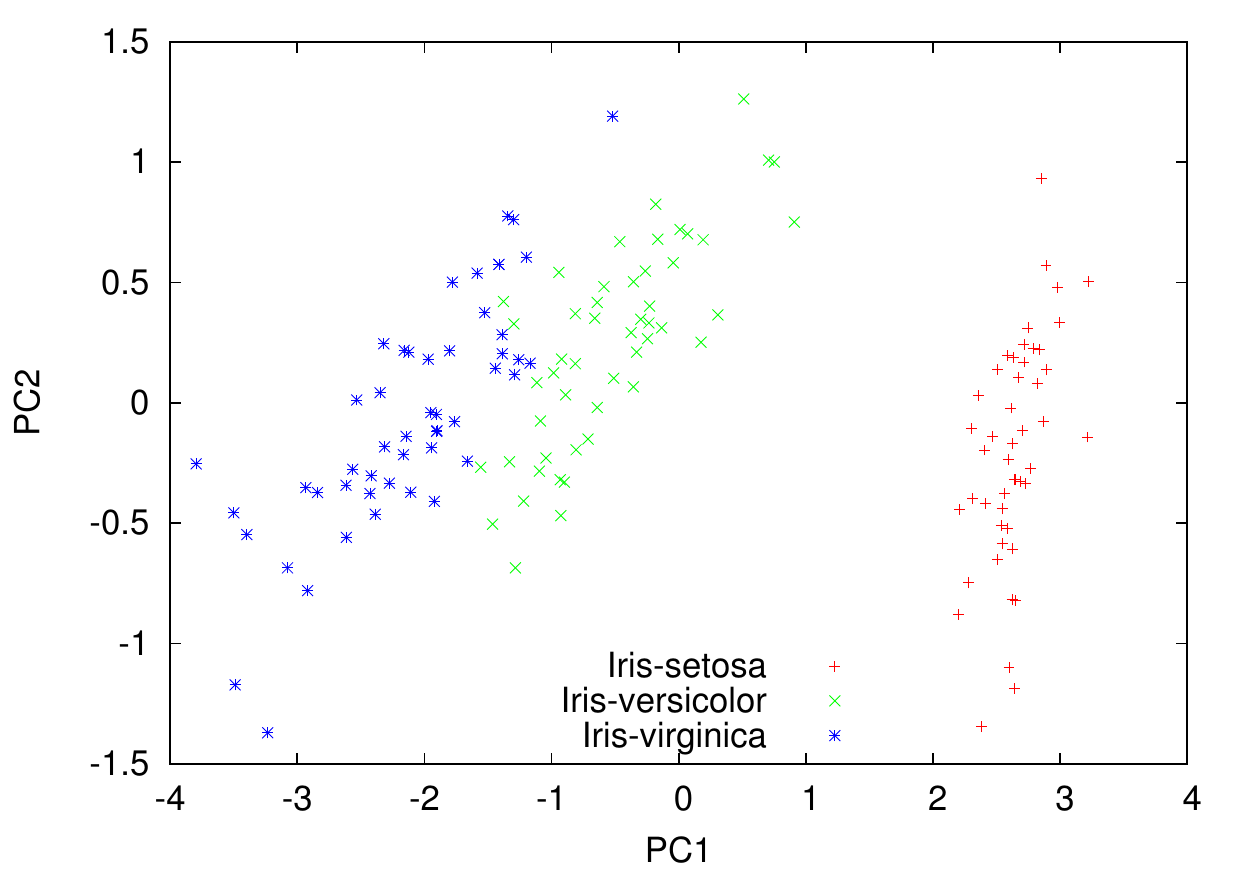}
 \caption{First two components of the PCA of the Iris dataset.}
 \label{fig:pca_iris}
\end{figure}
\begin{figure*}[ht!]
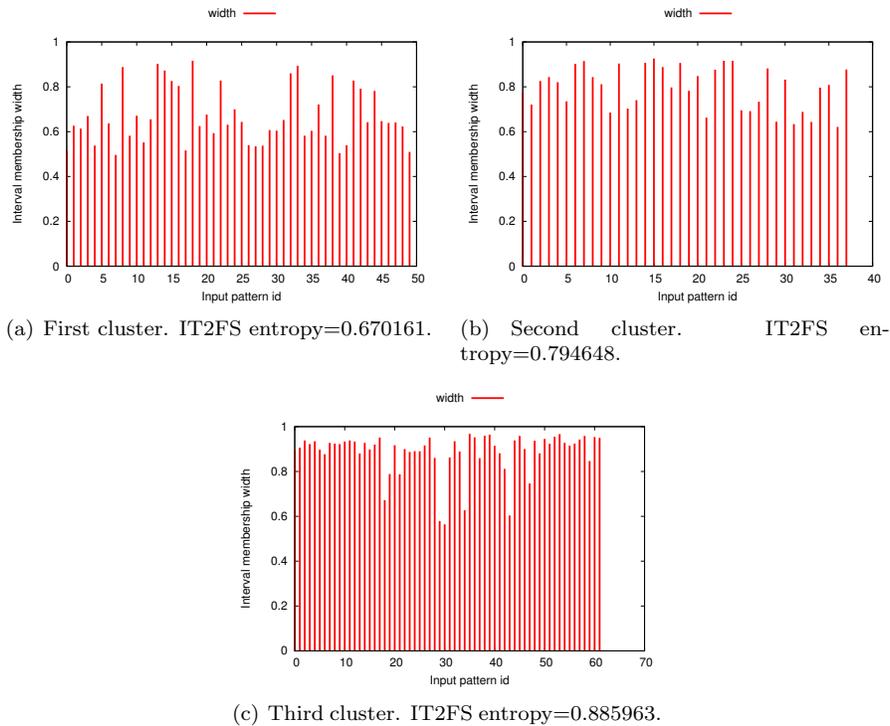

\centering

\subfigure[First cluster. IT2FS entropy=0.670161.]{
 \includegraphics[viewport=0 0 345 247,scale=0.45,keepaspectratio=true]{./UCI_Iris_C0}
\label{fig:uci_iris_c0}}
~
\subfigure[Second cluster. IT2FS entropy=0.794648.]{
 \includegraphics[viewport=0 0 345 247,scale=0.45,keepaspectratio=true]{./UCI_Iris_C1}
\label{fig:uci_iris_c1}}
~
\subfigure[Third cluster. IT2FS entropy=0.885963.]{
 \includegraphics[viewport=0 0 345 247,scale=0.45,keepaspectratio=true]{./UCI_Iris_C2}
\label{fig:uci_iris_c2}}

\caption{Widths and related entropy of the generated interval membership functions for the clusters calculated by $k$-means over the Iris dataset.}
\label{fig:kmeans}
\end{figure*}

\section{Conclusions and Future Directions}
\label{sec:conclusions}

The process of data granulation can be abstracted as a mapping among some input domain and a suitable domain of information granules.
In this paper, we have presented a conceptual framework to help designing and evaluating data granulation procedures.
The framework, called PULP, is based on the principles of uncertainty introduced by Klir.
The main idea is to consider the uncertainty of the input data as an invariant property, to be preserved as much as possible in the model of the output IG. The difference among the input and output uncertainty has been defined as the granulation error. Such a quantity has been used to (i) objectively judge over the quality of the granulation and (ii) to provide a common groundwork to compare different granulation procedures operating over the same data.

To put this idea in practice, we introduced a data granulation technique based on the MinSOD.
The procedure, called T2-MinSOD, is able to generate an interval-valued membership function by relying on the information of the input data dissimilarity values only.
We analyzed this procedure by considering different input data types and experimental settings.
Results show that T2-MinSOD is interpretable and it is able to preserve the uncertainty of the input with reasonable granulation errors.

Performing data granulation by considering the uncertainty as an invariant property to be preserved during the granulation process allows to apply this conceptual framework regardless 1) of the specific input data representation formalism and 2) the mathematical setting used to define information granules.

Future works include the theoretical consolidation of PULP.
For instance, it may be interesting to study if the mapping $\phi(\cdot)$ is bijective. This formal property may suggest important facts in terms of IG interpretability. In addition, we will study the so-called ``denagranulation'' by exploiting the invertibility of $\phi(\cdot)$.
Finally, we will use PULP for the purpose of benchmarking different data granulation procedures operating over the same data.

\appendix
\section{Brief Review of Type-2 Fuzzy Sets}
\label{sec:t2fs}

A T2FS $\widetilde{\mathcal{A}}$ defined on the universe of discourse $\mathcal{X}$ is represented as
\begin{align}
\label{eq:t2fs_definition}
\widetilde{\mathcal{A}} =& \{ (x, \mu_{\widetilde{\mathcal{A}}}(x) )\ |\ x\in\mathcal{X}, \\
\nonumber &\mu_{\widetilde{\mathcal{A}}}(x)=\{ (u, g_{x}(u))\ |\ u\in \mathcal{J}_{x}\subseteq[0, 1], g_{x}(u)\in[0, 1] \} \}.
\end{align}

We refer to $\mu_{\widetilde{\mathcal{A}}}(x)$ as the fuzzy membership value of $x$. Moreover, in Eq. \ref{eq:t2fs_definition} $\mathcal{J}_x$ represents the primary membership values of $x$, and $g_{x}(u)$ is named secondary grade \cite{mendel_t2simpe_2002,4565674}.

A T2FS in which $g_{x}(u)=1$ holds $\forall u\in \mathcal{J}_{x}$, reduces to the so-called interval type-2 fuzzy set (IT2FS) \cite{it2fs_matching}.
Please note that an IT2FS is a more general case of what is known in the literature as interval-valued fuzzy set \cite{Vlachos20071384,zeng2006relationship}, where $\mathcal{J}_{x}$ in (\ref{eq:t2fs_definition}) is constrained to be a subinterval of $[0, 1]$.
In this paper, $\mu_{\widetilde{\mathcal{A}}}(\cdot)$ is referred to as interval membership function, since $\mathcal{J}_{x}$ is always a subinterval of $[0, 1]$.
An IT2FS is fully characterized by the so-called Footprint Of Uncertainty (FOU), which is defined as:
\begin{equation}
\label{eq:fou}
\textrm{FOU}(\widetilde{\mathcal{A}}) = \displaystyle\bigcup_{x\in\mathcal{X}} \left[ \underline{\mu}_{\widetilde{\mathcal{A}}}(x), \overline{\mu}_{\widetilde{\mathcal{A}}}(x) \right].
\end{equation}

Note that in (\ref{eq:fou}) $\overline{\mu}_{\widetilde{\mathcal{A}}}(x)$ and $\underline{\mu}_{\widetilde{\mathcal{A}}}(x)$ denote the upper and lower endpoints of $\mathcal{J}_{x}$, respectively.
FOU (\ref{eq:fou}) can be characterized by two T1FSs, which are respectively called Upper Membership Function (UMF) and Lower membership Function (LMF),
\begin{align}
\textrm{UMF}_{\widetilde{\mathcal{A}}} &= \overline{\textrm{FOU}}(\widetilde{\mathcal{A}}) = \left\{ (x, \overline{\mu}_{\widetilde{\mathcal{A}}}(x))\ |\ x\in\mathcal{X} \right\}, \\
\textrm{LMF}_{\widetilde{\mathcal{A}}} &= \underline{\textrm{FOU}}(\widetilde{\mathcal{A}}) = \left\{ (x, \underline{\mu}_{\widetilde{\mathcal{A}}}(x))\ |\ x\in\mathcal{X} \right\}.
\end{align}

\section{The MinSOD Representative}
\label{sec:minsod}

Let $\mathcal{S}\subset\mathcal{X}, n=|\mathcal{S}|$, be a finite input set, and let $d: \mathcal{X}\times\mathcal{X}\rightarrow\mathbb{R}^+$ be a suitable dissimilarity measure \cite{gm_survey,odse}.
The MinSOD \cite{delvescovo+livi+rizzi+frattalemascioli2011} representative element $\nu\in\mathcal{S}$ is the element of $\mathcal{S}$ that minimizes the sum of distances:
\begin{equation}
\label{eq:minsod}
\nu = \argmin_{x_i\in\mathcal{S}} \sum_{x_j\in\mathcal{S}} d(x_i, x_j).
\end{equation}

The prototype $\nu$ computed according to Eq. \ref{eq:minsod} can be though as an ``approximation'' of the centroid in vector spaces. However, being based on a dissimilarity measure, it can be used to model datasets proper of non-geometric input domains, such as those of graphs and sequences \cite{odse}.

In addition to the representative element, i.e., $\nu$, several indicators can be defined. For instance, measures of compactness and size can be easily conceived; the compactness could be conceived as a statistics (e.g., average, standard deviation, etc.) of the dissimilarity values among (a subset of) the elements in $\mathcal{S}$ and $\nu$.

The following claim conveys a useful result from the interpretability viewpoint of the MinSOD.
\begin{thMinsod}
\label{th:Minsod}
The MinSOD element $\nu\in\mathcal{S}=\{x_1, x_2, ..., x_n\}\subset\mathbb{R}$, computed as shown in Eq. \ref{eq:minsod} by setting $d(x_i, x_j)=|x_i-x_j|$, corresponds to the median element of $\mathcal{S}$.
\end{thMinsod}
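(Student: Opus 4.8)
The plan is to reduce the claim to the classical fact that the sum of absolute deviations from a point on the real line is minimized at the median, and then to observe that this unconstrained global minimizer already lies in $\mathcal{S}$, so the constrained minimization in Eq.~\ref{eq:minsod} returns exactly that element.

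First I would relabel the data by their order statistics, $x_{(1)}\le x_{(2)}\le\cdots\le x_{(n)}$, and introduce the function $f(t)=\sum_{j=1}^{n}|t-x_{(j)}|$ defined for every $t\in\mathbb{R}$. I would record that $f$ is continuous, convex (a sum of convex functions), and piecewise linear with breakpoints only among the $x_{(j)}$; for $t$ in an open interval $(x_{(k)},x_{(k+1)})$ its slope equals $\#\{j:x_{(j)}<t\}-\#\{j:x_{(j)}>t\}=k-(n-k)=2k-n$. Thus $f$ is (weakly) decreasing as long as $2k-n\le 0$ and increasing once $2k-n\ge 0$.

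Next I would make the comparison along data points explicit by the elementary identity $f(x_{(k+1)})-f(x_{(k)})=(2k-n)\,(x_{(k+1)}-x_{(k)})$, obtained by splitting the sum over $j\le k$ and $j\ge k+1$. This quantity is $\le 0$ precisely when $k\le n/2$, so the partial sums of distances do not increase as the candidate moves up through the order statistics until the median index is reached, and strictly increase thereafter. Hence the minimum of $f$ over $\mathbb{R}$ is attained at $x_{(\lceil n/2\rceil)}$ when $n$ is odd (the unique median), and on the segment $[x_{(n/2)},x_{(n/2+1)}]$ — whose endpoints are elements of $\mathcal{S}$ — when $n$ is even.

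Finally I would conclude: since $\sum_{x_j\in\mathcal{S}}d(x_i,x_j)=f(x_i)$ for every $x_i\in\mathcal{S}$ and the global minimizer of $f$ over all of $\mathbb{R}$ is already realized at a median element of $\mathcal{S}$, the element $\nu$ chosen in Eq.~\ref{eq:minsod} is that median element. I expect the only point needing care to be the even-$n$ case, in which the $\argmin$ in Eq.~\ref{eq:minsod} need not be unique: both $x_{(n/2)}$ and $x_{(n/2+1)}$ (and possibly further tied data points) attain the minimum, and one should either fix a tie-breaking convention or simply note that each such minimizer is by definition a median element of $\mathcal{S}$. No genuine obstacle stems from the constraint $x_i\in\mathcal{S}$, precisely because the unconstrained optimum falls on data points; the whole argument is otherwise routine.
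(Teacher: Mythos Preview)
Your argument is correct and follows a genuinely different route from the paper. The paper proceeds by contradiction: assuming the MinSOD element $x_k$ differs from the median $x_i$ (with $x_k$ to the right of $x_i$ in the sorted list), it invokes an identity relating $\sum_j d_{ij}$ to $\sum_j d_{kj}$ and concludes that a sum of strictly positive distances would have to be nonpositive. You instead argue directly by analyzing the real function $f(t)=\sum_j |t-x_{(j)}|$ through its piecewise-linear slope $2k-n$, and then note that the unconstrained minimizer already lands on a data point, so the constrained $\argmin$ in Eq.~\ref{eq:minsod} coincides with it. Your approach is the standard textbook one and buys more: it makes convexity explicit, handles the even-$n$ case cleanly by identifying the entire flat segment $[x_{(n/2)},x_{(n/2+1)}]$ of minimizers, and addresses non-uniqueness of the $\argmin$. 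The paper's contradiction is terser but leans on an identity that, as literally written, does not hold in general (e.g.\ take $\mathcal{S}=\{0,1,2,3,4\}$ with $i=3$, $k=5$: the left side is $6$ while the right side is $10-3=7$), and it is silent on ties and even $n$; your treatment is both more careful and more complete.
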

\begin{proof}
Let $x_i$ be the median of $\mathcal{S}$.
We prove the claim by contradiction, that is, by assuming $\nu=x_k$, with $x_k\neq x_i$.
Let us shorten the dissimilarity as $d(x_i, x_j)=d_{ij}$.
Let us assume that the elements of $\mathcal{S}$ are ordered in ascending order; without loss of generality, we assume also that $x_k$ is located at the right-hand side of $x_i$. This is possible since $x_i$ is the median element.
Since $x_k$ is the MinSOD element, the following inequality must hold:
\begin{equation}
\label{eq:ineq1}
\displaystyle\sum_{j=1}^{n} d_{kj} \leq \displaystyle\sum_{j=1}^{n} d_{ij}.
\end{equation}

By using the fact that
\begin{align}
\displaystyle\sum_{j=1}^{n} d_{ij} = \displaystyle\sum_{j=1}^{n} d_{kj} - \sum_{h=i+1}^{k} d_{ih},
\end{align}
we can rewrite the inequality (\ref{eq:ineq1}) as
\begin{align}
\displaystyle\sum_{j=1}^{n} d_{kj} \leq \displaystyle\sum_{j=1}^{n} d_{kj} - \sum_{h=i+1}^{k} d_{ih}.
\end{align}

By simple manipulations we obtain:
\begin{align}
\displaystyle\sum_{j=1}^{n} d_{kj} - \displaystyle\sum_{j=1}^{n} d_{kj} + \sum_{h=i+1}^{k} d_{ih} \leq 0 \Rightarrow \sum_{h=i+1}^{k} d_{ih} \leq 0,
\end{align}
which is impossible, since at least $d_{ik}>0$ must hold.
\end{proof}

\bibliographystyle{abbrvnat}
\bibliography{/home/lorenzo/University/Research/Publications/Bibliography.bib}
\end{document}